\documentclass[11pt]{article}

\usepackage[margin=1in]{geometry}
\usepackage[T1]{fontenc}
\usepackage[utf8]{inputenc}
\usepackage{lmodern}
\usepackage{microtype}
\usepackage{amsmath, amssymb, amsthm, mathtools}
\usepackage{booktabs}
\usepackage{array}
\usepackage{enumitem}
\usepackage[numbers,sort&compress]{natbib}
\usepackage{hyperref}
\usepackage{xcolor}
\usepackage{bbm}
\usepackage[ruled,vlined,linesnumbered]{algorithm2e}
\usepackage{tikz}
\usetikzlibrary{arrows.meta, positioning, shapes.geometric, fit, calc, decorations.pathreplacing}

\hypersetup{
  colorlinks=true,
  linkcolor=blue!50!black,
  citecolor=blue!50!black,
  urlcolor=blue!50!black,
  pdftitle={Cascade-Aware Multi-Agent Routing via Adaptive Geometry Selection},
  pdfauthor={Davide Di Gioia}
}

\newtheorem{proposition}{Proposition}
\newtheorem{definition}{Definition}

\newcommand{\E}{\mathbb{E}}
\newcommand{\Prob}{\mathbb{P}}
\newcommand{\R}{\mathbb{R}}
\newcommand{\I}{\mathbbm{1}}
\newcommand{\relu}{\operatorname{ReLU}}
\newcommand{\sigm}{\sigma}
\newcommand{\norm}[1]{\lVert #1\rVert}

\title{Cascade-Aware Multi-Agent Routing:\\Adaptive Geometry Selection on Live Execution Graphs}
\author{Davide Di Gioia\\\small\texttt{ucesigi@ucl.ac.uk}}
\date{March 2026}

\begin{document}

\maketitle

% ============================================================
%  ABSTRACT  (~250 words, down from ~400)
% ============================================================
\begin{abstract}
Advanced AI reasoning systems increasingly route tasks through dynamic execution graphs of specialized agents.
We identify a structural blind spot shared by this architectural class: their schedulers optimize load and fitness but carry no model of how failure propagates differently depending on whether the graph is tree-like, cyclic, or mixed.
In tree-like regimes a single agent failure cascades exponentially; in dense cyclic regimes the same failure self-limits within one hop.
A geometry-blind scheduler cannot distinguish these cases.

We formalize this \emph{failure-propagation observability gap} as a runtime decision problem: which propagation model to trust at each scheduling step. We then prove a cascade-sensitivity condition: a failure cascade becomes supercritical when per-edge propagation probability exceeds the inverse of the graph's branching factor ($p > e^{-\gamma}$, where $\gamma$ is the BFS shell-growth exponent).
We close the gap with a lightweight add-on risk-scoring module, a \emph{spatio-temporal sidecar} that reads the live execution graph and recent failure history to predict which routing geometry best fits the current structural regime.
Concretely, the sidecar comprises (i)~a Euclidean propagation scorer that models failure spread in dense, cyclic subgraphs, (ii)~a negatively-curved (hyperbolic) scorer that captures exponential risk amplification in tree-like subgraphs, and (iii)~a compact learned gate ($9 \!\to\! 12 \!\to\! 1$ MLP, 133 parameters) that blends the two scores based on six topology statistics and three geometry-aware features.

On 250 benchmark scenarios spanning five topology regimes, the sidecar lifts the native scheduler's win rate from $50.4\%$ to $87.2\%$ ($+36.8$\,pp).
In tree-like regimes the gain reaches $+48$ to $+68$\,pp, matching the cascade-sensitivity prediction.
The learned gate achieves held-out AUC\,$=0.9247$, accuracy\,$86.4\%$, and ECE\,$0.0681$, confirming that geometry preference is recoverable from live structural signals.
Cross-architecture validation on Barab\'{a}si--Albert, Watts--Strogatz, and Erd\H{o}s--R\'{e}nyi graphs confirms that propagation modeling generalizes across graph families.
\end{abstract}

% ============================================================
%  1. INTRODUCTION
% ============================================================
\section{Introduction}

A prevailing approach to building advanced AI reasoning systems is orchestrating specialized agents, such as planners, generators, critics, and verifiers, into dynamic pipelines.
Frameworks such as LangGraph, AutoGen~\citep{wu2023autogen}, and CrewAI, modular mixture-of-experts routing layers, and self-improving multi-agent engines all share the same underlying architecture: agents are nodes, task delegations are directed edges, and executing a task means routing a signal through a continuously evolving graph.
The same structure appears in advanced neuro-symbolic reasoning systems such as AlphaGeometry~\citep{trinh2024} and OpenCog Hyperon~\citep{goertzel2023}.

This architectural consensus creates a shared, unexamined blind spot.
No matter how sophisticated the individual agents, the scheduler that routes tasks is \emph{geometry-blind}: it optimizes based on node load and agent fitness, but carries no model of how the graph's overarching shape dictates failure propagation.
In tree-like delegation regimes, a single agent failure cascades exponentially through the branching structure.
In dense, cyclic regimes, the same failure is safely absorbed within one or two hops.
Because current schedulers cannot perceive this difference in graph geometry, they cannot proactively avoid catastrophic cascades.

\textit{Intuition.}
Think of a navigation system that routes traffic based only on current road congestion.
If the city is laid out as a grid, a blockage on one street has a small footprint; surrounding blocks offer immediate detours.
But if the city is fed by a single arterial highway with few exits, the same blockage halts all downstream traffic.
A congestion-only router cannot distinguish the two layouts; it needs a model of the road topology, not just the traffic.
Geometry-blind multi-agent schedulers face the same problem: node load and fitness are the congestion signal, but the execution graph's shape, tree-like or cyclic, determines how far a single agent failure travels.

\begin{figure}[t]
\centering
\begin{tikzpicture}[
  >=Stealth,
  safe/.style={circle, draw=black!50, fill=green!20, thick, minimum size=7mm, inner sep=0pt},
  failed/.style={circle, draw=red!80, fill=red!30, very thick, minimum size=7mm, inner sep=0pt},
  affected/.style={circle, draw=orange!80, fill=orange!25, very thick, minimum size=7mm, inner sep=0pt},
  gedge/.style={draw=gray!50, thick},
  cascade/.style={->, draw=red!65, very thick},
]
% ── (a) TREE-LIKE: 7 nodes, failure cascades everywhere ──
\begin{scope}
  \node[failed]   (A) at (3.0, 0.0)  {}; % root - failure injected
  \node[affected] (B) at (1.5,-1.3)  {};
  \node[affected] (C) at (4.5,-1.3)  {};
  \node[affected] (D) at (0.5,-2.7)  {};
  \node[affected] (E) at (2.5,-2.7)  {};
  \node[affected] (F) at (3.5,-2.7)  {};
  \node[affected] (G) at (5.5,-2.7)  {};
  \draw[cascade] (A)--(B); \draw[cascade] (A)--(C);
  \draw[cascade] (B)--(D); \draw[cascade] (B)--(E);
  \draw[cascade] (C)--(F); \draw[cascade] (C)--(G);
  \node[red!70!black, font=\footnotesize\bfseries] at (3.0, 0.6) {failure};
  \node[font=\small, align=center] at (3.0,-3.5)
    {\textbf{(a) Tree-like execution graph}\\
     {\footnotesize All 6 downstream agents affected}};
\end{scope}
% ── (b) DENSE/CYCLIC: 7 nodes, failure self-limits ──
\begin{scope}[xshift=8.2cm]
  \node[failed]   (a) at (1.5, 0.0)  {};
  \node[affected] (b) at (3.2,-0.8)  {};
  \node[safe]     (c) at (3.5,-2.3)  {};
  \node[safe]     (d) at (2.0,-3.1)  {};
  \node[safe]     (e) at (0.0,-3.1)  {};
  \node[safe]     (f) at (-0.5,-1.6) {};
  \node[safe]     (g) at (0.5,-0.8)  {};
  % Dense cross-edges provide alternative paths
  \draw[gedge] (a)--(g); \draw[gedge] (a)--(f);
  \draw[gedge] (b)--(c); \draw[gedge] (b)--(g); \draw[gedge] (b)--(d);
  \draw[gedge] (c)--(d); \draw[gedge] (c)--(e);
  \draw[gedge] (d)--(e); \draw[gedge] (e)--(f); \draw[gedge] (f)--(g);
  \draw[gedge] (g)--(d);
  % Only one cascade hop
  \draw[cascade] (a)--(b);
  \node[red!70!black, font=\footnotesize\bfseries] at (1.5, 0.6) {failure};
  \node[font=\small, align=center] at (1.5,-3.9)
    {\textbf{(b) Dense/cyclic execution graph}\\
     {\footnotesize 5/6 agents unaffected: alternative paths absorb spread}};
\end{scope}
\end{tikzpicture}
\caption{The geometry-blindness problem. The same single-agent failure (red) injected into a tree-like execution graph cascades exponentially through all downstream delegation chains~(a), while in a dense, cyclic graph (b) it is absorbed within one hop because alternative paths reroute around the failure. A scheduler that sees only node load and fitness cannot distinguish these two cases; our sidecar detects the relevant structural difference, branching rate vs.\ cycle density, from live graph features and adjusts its risk model accordingly.}
\label{fig:toy}
\end{figure}
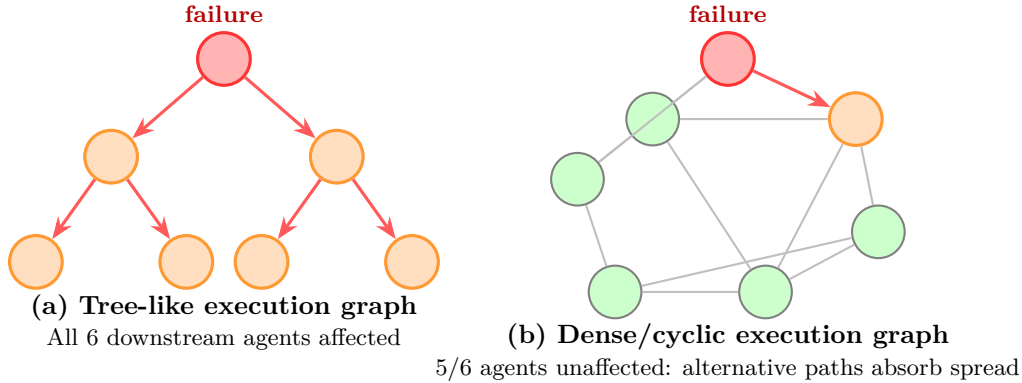

\paragraph{Research question.}
Given an AI system whose execution graph dynamically shifts between tree-like, cyclic, and mixed regimes, how can a scheduler recognize its current topology and adapt its routing strategy to minimize failure cascade risk?

To our knowledge, this question is not posed in prior work.
Prior work in geometric graph learning establishes that different topologies require different geometric representations (e.g., hyperbolic space for trees, Euclidean space for dense graphs), but these geometries are fixed prior to training; they do not adapt to live, intra-episode structural shifts.
Multi-agent routing papers optimize scheduling and coordination without modeling graph geometry or failure propagation.
Neither addresses the runtime decision of which propagation model to trust as topology shifts, a problem that arises in any geometry-blind scheduler operating on a time-varying execution graph (formalized as an \emph{online geometry-control problem} in Section~\ref{sec:formulation}).

\newpage
\paragraph{Contributions.}
\begin{enumerate}[leftmargin=1.5em]
\item \textbf{Blind spot identification.} We identify geometry-blindness as a failure-propagation observability gap common to execution-graph agent systems (Section~\ref{sec:gap}).
\item \textbf{System-level cost.} We quantify the gap via a system-level ablation: load-and-fitness-only routing achieves only $50.4\%$ win rate, no better than chance under topology stress (Section~\ref{sec:ablation}).
\item \textbf{Deployable fix (primary contribution).} A \emph{structural baseline} (centrality + load + 2-hop neighborhood, zero trained parameters) immediately recovers $87.2\%$ win rate and transfers to out-of-distribution graph families without retraining (Section~\ref{sec:results}).
\item \textbf{Adaptive refinement (optional).} A learned geometry selector ($9\!\to\!12\!\to\!1$ MLP, 133 parameters) ties the structural baseline on win rate while improving route-separation margin by $+44\%$, offering an additional layer when fine-tuning data is available (Section~\ref{sec:results}).
\item \textbf{Theoretical grounding.} We prove a cascade-sensitivity condition (Proposition~\ref{prop:cascade}) whose regime predictions match the empirical curves, explaining \emph{when} and \emph{why} each model adds value (Section~\ref{sec:cascade}).
\end{enumerate}

\paragraph{Research questions \& hypotheses.}
\begin{itemize}[leftmargin=1.5em, itemsep=2pt]
\item \textbf{RQ1 / H1} (Perception gap): Does explicit failure-propagation modeling improve route selection relative to load-and-fitness-only routing?
\item \textbf{RQ2 / H2} (Regime adaptation): Under topology regime shift, does a learned geometry selector outperform any fixed geometric prior?
\item \textbf{RQ3 / H3} (Theory alignment): Do the largest gains occur in high-expansion tree-like regimes, as predicted by cascade-expansion analysis?
\end{itemize}

\paragraph{Paper outline.}
Section~\ref{sec:gap} reviews four literatures and identifies the geometry-blindness gap that none addresses.
Section~\ref{sec:formulation} formalizes the gap as an online geometry-control problem and argues that both \emph{spatial} (graph-structural) and \emph{temporal} (failure-history) information are needed to close it.
Section~4 builds the spatio-temporal route-risk model that couples temporal failure intensity with graph propagation; neither dimension alone is sufficient, as the ablation in Section~\ref{sec:decomposition} will confirm.
Section~5 adds adaptive geometry selection; Section~\ref{sec:cascade} grounds the gate in a cascade-sensitivity analysis.
Sections~7--8 present the experimental setup and results, including a component decomposition showing that temporal modeling without spatial structure is actively harmful.
Section~9 discusses implications; Section~\ref{sec:limitations} states limitations.

% ============================================================
%  2. RELATED WORK AND GAP
% ============================================================
\section{Related Work and Gap}
\label{sec:gap}

The present work lies at the intersection of four literatures.
Each contributes an important ingredient; none directly studies the question posed here.

\paragraph{Geometric graph learning.}
Hyperbolic representation learning is now standard for graphs with latent hierarchy \citep{ganea2018,chami2019}.
Mixed-curvature models showed that no single constant-curvature manifold is universally adequate \citep{skopek2020,bachmann2020}.
These papers establish that geometry matters, but geometry is chosen once for a dataset, not online.
Recent dynamic-hyperbolic work \citep{sun2021,yang2021} improves time-evolving graph modeling but still optimizes a chosen non-Euclidean family rather than treating geometry selection as the prediction problem.

\paragraph{Spatio-temporal \& dynamic graph learning.}
Canonical models such as DCRNN \citep{li2018}, STGCN \citep{yu2018}, and Graph WaveNet \citep{wu2019} combine graph propagation with temporal modeling.
Dynamic-graph methods \citep{trivedi2019,rossi2020,pareja2020,xu2020} focus on continuous-time representation learning.
These methods assume a fixed graph-learning architecture; they do not ask whether the underlying geometric prior should itself switch as topology changes.

\paragraph{Temporal point processes.}
Hawkes processes \citep{hawkes1971} and neural variants \citep{du2016,mei2017,zuo2020} model self-exciting dynamics.
\citet{zuo2020} encode inter-arrival patterns via Transformer self-attention over event histories; we use a simpler gap-based burst statistic as one ingredient in route-risk scoring.
The gap addressed here is not temporal dependence per se but its coupling with geometry selection, which Transformer Hawkes and its variants do not address.

\paragraph{Multi-agent communication \& orchestration.}
Learned communication and coordination \citep{sukhbaatar2016,das2019,jiang2018} and classical blackboard architectures \citep{hayesroth1985,durfee1991} study how agents exchange information over changing interaction structures, but do not build a route-risk estimator whose inductive bias changes with topology.

Recent work on LLM routing with bandit feedback \citep{wei2025barp} addresses a complementary problem: selecting \emph{which model} to call for a given query under a cost-quality trade-off.
In that framing, routing goes \emph{to} a model: a single-hop selection with no execution graph, no delegation chain, and no failure-propagation model.
Our problem is orthogonal: routing goes \emph{through} a multi-agent structure, and the question is which path through the live execution graph minimizes cascade risk given the graph's current topology.

The most directly relevant recent work is \citet{kim2025scaling}, who conduct a controlled evaluation of five canonical multi-agent topologies across 180 configurations and derive quantitative scaling laws.
Their key finding, that independent-agent coordination amplifies errors $17.2\times$ while centralized coordination contains this to $4.4\times$, directly motivates our concern.
A predictive coordination model using topology-class and task-property features achieves cross-validated $R^2 = 0.524$, confirming that topology is a strong predictor of system-level outcomes.
However, their analysis operates at \emph{design time}: topology is a fixed architectural choice.
It does not model the intra-episode structural shifts that occur as a live execution graph moves between regimes during a single solve.
Our work addresses exactly this runtime perception gap.

\begin{definition}[Geometry-blind routing]
\label{def:geometry_blind}
A system routing among candidate routes $r \in \mathcal{R}_t$ at time~$t$ using~$\mathrm{score}(r,t) = f(\psi_t(r), \xi_t)$, where $\psi_t(r)$ are route-level aggregate observables (load, fitness, cost) and $\xi_t$ are task features, is \emph{geometry-blind} if the scoring rule does not depend on any propagation-sensitive structural representation $\phi(G_t[r])$ measuring expansion, cyclicity, or hyperbolicity of the route subgraph.

\smallskip\noindent
\textit{Equivalently:} routing is geometry-blind iff, conditional on $(\psi_t(r), \xi_t)$, the routing decision is invariant to any structural modification of $G_t[r]$ that preserves those aggregate observables.
\end{definition}

\paragraph{Gap statement.}
Prior work does not identify geometry-blindness as a failure-propagation observability gap in execution-graph agent systems, nor provide a runtime mechanism to close it.
The individual components (hyperbolic structure, dynamic graphs, temporally clustered failures) are each well established; the contribution is their integration into a runtime geometry-control layer inside a live multi-agent scheduler.

Closing this gap requires reasoning about \emph{where} in the graph structure a failure sits (spatial) and \emph{when} failures cluster in time (temporal), simultaneously.
Neither dimension alone suffices: temporal models that ignore graph expansion cannot distinguish supercritical from self-limiting cascades; spatial models that ignore failure history cannot detect burst regimes.
We formalize this joint requirement next.

% ============================================================
%  3. PROBLEM FORMULATION
% ============================================================
\section{Problem Formulation}
\label{sec:formulation}

\subsection{System Context: Genesis~3}

Genesis~3 is a multi-agent reasoning engine in which specialized agents (planners, generators, critics, verifiers) collaborate through a 9-gate self-modification pipeline.
Each node $v \in V_t$ is an active agent instance; each directed edge $(u,v) \in E_t$ is a task-delegation dependency.
A \emph{route} $r = (v_1, \ldots, v_k)$ is an ordered delegation chain selected at each solve episode.
The 9-gate pipeline creates deep sequential dependencies, meaning failure at one gate may propagate forward and block all downstream stages. This is the key motivation for failure-propagation modeling.

Without the spatio-temporal sidecar, Genesis~3 selects routes using a \texttt{LinUCBTuner}: a contextual bandit with 9 route arms and a 6-dimensional context vector, scoring routes by team fitness and mean node load.
Concretely, the native score is $\mathrm{score}(r) = f_{\mathrm{team}}(r) \times (1 - 0.5\,\bar{\ell}(r))$, where $f_{\mathrm{team}}$ is an exponentially smoothed success rate and $\bar{\ell}$ is normalized mean load over the route.
This is adequate for steady-state load balancing but carries no failure-propagation model and no geometric prior.

\subsection{Abstract Model}

To formalize this routing problem, we need to represent both the \emph{spatial structure} of the multi-agent system, namely which agents exist and how they are connected, and the \emph{temporal history} of its failures, namely when and where things have gone wrong.
We model the running system as a time-indexed sequence of directed graphs
\[
\mathcal{G} = \{G_t = (V_t, E_t, w_t, \ell_t) : t \in \R_{\ge 0}\},
\]
where $w_t$ denotes edge reliability and $\ell_t$ denotes node load.
A route $r$ at time $t$ activates a subgraph $G_t[r] = (V_t[r], E_t[r])$.
Let the failure history be
\[
\mathcal{F} = \{(\tau_i, v_i, s_i, c_i, r_i)\}_{i=1}^N,
\]
where $\tau_i$ is time, $v_i$ is the affected node, $s_i \in [0,1]$ is severity, $c_i$ is event category, and $r_i$ is an optional route label.

The task is to construct a route-risk score $R(r, t) \in \R$ that predicts vulnerability under future failure propagation.
The central difficulty is that the correct geometric inductive bias is time-dependent.
We introduce a soft selector $\pi_t(r) \in [0,1]$, interpreted as preference for hyperbolic geometry, to be learned from graph structure and recent event history.

\subsection{Why Both Spatial and Temporal?}
\label{sec:why_st}

A natural question is whether one dimension suffices.
Consider each in isolation:

\begin{itemize}[leftmargin=1.5em, itemsep=3pt]
\item \textbf{Temporal only.}
A model that tracks when failures occur (e.g., exponentially decayed intensity) but ignores graph structure cannot distinguish a burst of failures on a leaf node (harmless) from the same burst at a high-degree hub (catastrophic).
It assigns identical risk to routes that traverse structurally different subgraphs, because it has no representation of expansion rate or cycle rank.

\item \textbf{Spatial only.}
A model that computes centrality, degree, or 2-hop neighborhoods but ignores failure timing cannot detect that a structurally safe route has entered a burst regime where temporally clustered failures are about to saturate a node's recovery capacity.
It treats a quiescent high-centrality node identically to one under active burst.

\item \textbf{Spatio-temporal.}
Coupling $\lambda_t(v;r)$ (temporal history per node) with graph-structural propagation (how that node's risk spreads through the route subgraph) allows the model to answer the operational question: \emph{this route traverses a high-expansion subgraph, and the nodes on it are currently experiencing temporally clustered failures. How much worse is that than the same failures on a cyclic subgraph?}
\end{itemize}

The ablation in Section~\ref{sec:decomposition} confirms this decomposition empirically: temporal-only modeling (Euclidean row in Table~\ref{tab:overall}) achieves $48.0\%$ win rate, \emph{worse} than the geometry-blind native baseline ($50.4\%$), because temporal propagation without spatial awareness adds noise rather than signal.
Adding spatial structure is the decisive intervention.

% ============================================================
%  ARCHITECTURE FIGURE
% ============================================================
\begin{figure}[t]
\centering
\begin{tikzpicture}[
  >=Stealth,
  box/.style={draw, rounded corners=3pt, minimum height=0.8cm, minimum width=2.2cm, align=center, font=\small},
  data/.style={draw, rounded corners=1pt, minimum height=0.6cm, minimum width=1.8cm, align=center, font=\footnotesize, fill=blue!5},
  gate/.style={draw, rounded corners=3pt, minimum height=0.8cm, minimum width=2.2cm, align=center, font=\small, fill=orange!15, thick},
  score/.style={draw, rounded corners=3pt, minimum height=0.8cm, minimum width=2.0cm, align=center, font=\small, fill=green!10},
  arrow/.style={->, thick, >=Stealth},
]

% Input data
\node[data] (graph) at (0, 0) {$G_t$\\Execution graph};
\node[data] (fails) at (3.5, 0) {$\mathcal{F}$\\Failure history};
\node[data] (route) at (7, 0) {$r$\\Candidate route};

% Feature extraction
\node[box, fill=blue!8] (feat) at (3.5, -1.6) {Feature map $\phi(G_t[r]) \in \R^9$\\{\footnotesize $\phi_1$--$\phi_6$: topology \quad $\phi_7$--$\phi_9$: geometry}};

% Two scoring branches
\node[score] (euc) at (0.5, -3.5) {$R_{\mathrm{Euc}}(r,t)$\\Euclidean};
\node[score] (hyp) at (6.5, -3.5) {$R_{\mathrm{Hyp}}(r,t)$\\Hyperbolic};

% Geometry gate
\node[gate] (gate) at (3.5, -3.5) {Geometry Gate\\$\pi_t(r)$\\{\footnotesize MLP $9\!\to\!12\!\to\!1$}};

% Blender
\node[box, fill=yellow!12, thick] (blend) at (3.5, -5.2) {$R(r,t) = \pi \cdot R_{\mathrm{Hyp}} + (1\!-\!\pi)\cdot R_{\mathrm{Euc}}$};

% Output
\node[box, fill=red!8] (out) at (3.5, -6.5) {Route-risk signal\\to scheduler};

% Arrows
\draw[arrow] (graph) -- (feat);
\draw[arrow] (fails) -- (feat);
\draw[arrow] (route) -- (feat);

\draw[arrow] (feat) -- (gate);
\draw[arrow] (feat) -| (euc);
\draw[arrow] (feat) -| (hyp);

\draw[arrow] (graph.south) -- ++(0,-0.4) -| (euc.north);
\draw[arrow] (fails.south) -- ++(0,-0.3) -| (hyp.north);

\draw[arrow] (euc) |- (blend);
\draw[arrow] (hyp) |- (blend);
\draw[arrow] (gate) -- (blend);

\draw[arrow] (blend) -- (out);

\end{tikzpicture}
\caption{Spatio-temporal sidecar architecture. The execution graph $G_t$, failure history $\mathcal{F}$, and candidate route $r$ feed into a 9-dimensional structural feature map $\phi$, which drives both the Euclidean/hyperbolic scoring branches and a learned geometry gate. The gate output $\pi_t(r)$ blends the two scores into a single route-risk signal for the scheduler.}
\label{fig:architecture}
\end{figure}
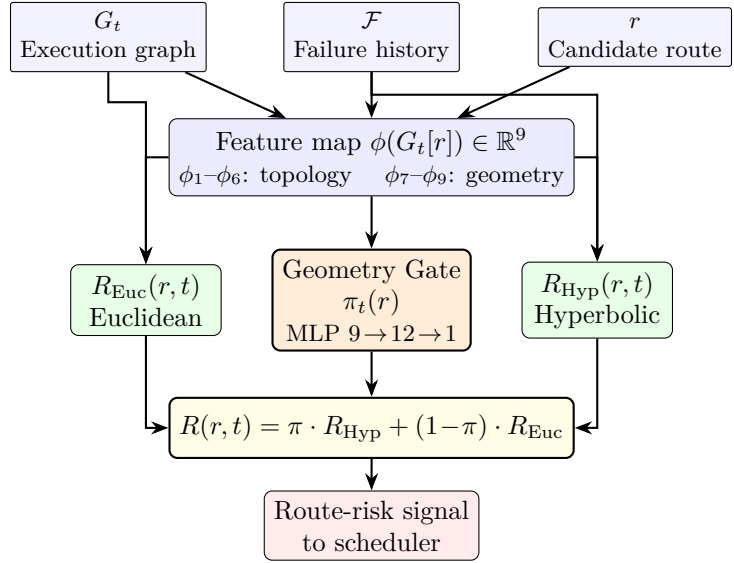

% ============================================================
%  4. SPATIO-TEMPORAL ROUTE-RISK MODEL
% ============================================================
\section{Spatio-Temporal Route-Risk Model}

\subsection{Temporal Failure Intensity}

For node $v$ at time $t$, define the temporally decayed failure intensity
\[
\lambda_t(v; r) = \sum_{i : v_i = v,\; \tau_i \le t}
\Bigl(
  \I\!\left[r_i \in \{\varnothing, r\}\right]\;
  s_i\;
  \exp\!\left(-\frac{t - \tau_i}{h}\right)\;
  \alpha(c_i)
\Bigr),
\]
where $h > 0$ is a half-life parameter and $\alpha(c_i)$ is a category-dependent multiplier.

The hyperbolic model augments this with a same-node burst term.
Let $m_v$ count relevant events at node $v$ up to time $t$, and $\tau_{(1)} < \cdots < \tau_{(m_v)}$ be the ordered failure times.
The gap-based burst statistic is
\[
b_t(v; r) =
\begin{cases}
\dfrac{1}{m_v-1}\displaystyle\sum_{q=1}^{m_v-1}
\exp\!\left(-\dfrac{\tau_{(q+1)} - \tau_{(q)}}{\delta}\right), & m_v \ge 2,\\[1.0ex]
0, & m_v < 2,
\end{cases}
\]
where $\delta > 0$ is an excitation-decay constant.
The damped intensity is then
\[
\tilde{\lambda}_t(v; r) = \lambda_t(v; r)
  + 0.14\,b_t(v; r)\,s_t(v; r)\,d_t(v; r)\,o_t(v; r),
\]
with saturation $s_t = \tanh(\bar{\lambda}_t)$, diversity $d_t = m_v^{-1/2}$, and overload factor $o_t = (1 + \max\{0, \lambda_t - \lambda_0\})^{-1}$,
where $\lambda_0 = 1$ is the baseline excitation threshold and
\[
\overline{\lambda}_t(v; r) := \frac{1}{m_v}\sum_{q=1}^{m_v}
s_{(q)}\,\exp\!\left(-\frac{t-\tau_{(q)}}{h}\right)\,\alpha\bigl(c_{(q)}\bigr)
\]
denotes the mean decayed severity of same-node events included in the base intensity.
This is not a full Hawkes intensity; it is a bounded burstiness bonus based on local inter-arrival times.

\subsection{Euclidean Route Propagation}

We use Euclidean propagation to model failure diffusion in dense, cyclic subgraphs, where risk spreads outward like heat through a grid: many short alternative paths limit individual failures to a small local footprint.

The Euclidean baseline propagates node-local risk over the route subgraph with diffusion, adaptive recovery, latency, and bottleneck penalties:
\[
x_{t+1}(u) = (1 - \rho_u)\,x_t(u) + \sum_{(v,u) \in E_t[r]} \eta_{vu}\,x_t(v),
\]
where $\rho_u$ is adaptive recovery and $\eta_{vu}$ absorbs diffusion strength, load, latency, reliability, and route-local amplification.
The Euclidean route score $R_{\mathrm{Euc}}(r,t)$ is a weighted combination of infected mass, frontier risk, tail risk, latency penalty, and bottleneck pressure.

\subsection{Hyperbolic Route Propagation}

We use a negatively-curved (hyperbolic) space to model failure propagation in tree-like subgraphs.
The key property of hyperbolic geometry is that its volume expands exponentially with radius, exactly mirroring how the number of reachable nodes grows with BFS depth in a branching structure.
This makes distances in hyperbolic space a natural measure of cascade reach: a failure near the root of a delegation tree is far from the leaves in Euclidean terms, but hyperbolic distances correctly reflect the exponentially large downstream exposure.

Concretely, the hyperbolic model embeds nodes into a Poincar\'{e} ball with curvature magnitude $\kappa > 0$:
\[
z_t(v) \in \mathbb{B}^{d}_{\kappa} := \bigl\{z \in \R^d : \norm{z} < 1/\sqrt{\kappa}\bigr\}.
\]
Routes are scored by geodesic quantities:
\[
R_{\mathrm{Hyp}}(r,t) =
  w_1 C_t(r)
  - w_2 T_t(r)
  - w_3 F_t(r)
  - w_4 B_t(r)
  + w_5 D_t(r),
\]
where $C_t$ is compactness, $T_t$ is tail risk, $F_t$ is frontier spread, $B_t$ is bottleneck pressure, and $D_t$ is decoder consistency.
Curvature $\kappa$ is fitted from graph-structure stress via golden-section search.

Sections~4.1--4.3 define the temporal and spatial components; but which geometric model should be trusted at any given moment?
When the execution graph is tree-like, the hyperbolic model captures exponential expansion faithfully; when it is dense and cyclic, the Euclidean diffusion model is more appropriate.
Rather than committing to either, we learn a geometry selector that reads the current structural state and blends both models.

% ============================================================
%  5. ADAPTIVE GEOMETRY SELECTION
% ============================================================
\section{Adaptive Geometry Selection}

\subsection{Structural Feature Map}

We define a 9-dimensional structural feature map $\phi(G_t[r]) \in \R^9$ consisting of six topology statistics and three geometry-aware features (Table~\ref{tab:features}).

\begin{table}[t]
\centering
\small
\begin{tabular}{@{}clll@{}}
\toprule
& Feature & Definition & Scope \\
\midrule
$\phi_1$ & Edge-surplus ratio & $(|E|-(|V|-1))/|V|$ & Global \\
$\phi_2$ & Reciprocal ratio & Fraction of bidirectional edges & Global \\
$\phi_3$ & Triangle density & Closed triplets / possible triplets & Global \\
$\phi_4$ & Route crosslink ratio & Excess route-local edges / route $|V|$ & Route \\
$\phi_5$ & Route load mean & Mean node load over active route & Route \\
$\phi_6$ & Route length norm & Route $|V|$ / total $|V|$ & Ratio \\
\midrule
$\phi_7$ & Shell-growth slope & $\tanh(\max(0, \hat{\gamma}))$ from BFS fit & Global \\
$\phi_8$ & Cycle-rank norm & $(|E_{\mathrm{und}}| - |V| + C) / |V|$ & Global \\
$\phi_9$ & Fitted curvature norm & $(\hat{\kappa} - 0.10) / (4.50 - 0.10)$ & Global \\
\bottomrule
\end{tabular}
\caption{Nine structural features used by the geometry gate.
$\hat{\gamma}$ is the OLS slope of $\ln(|V_k|+1)$ on BFS depth $k$;
$\hat{\kappa}$ is the Poincar\'{e}-ball curvature fitted by golden-section search.
All features are normalized to $[0,1]$.}
\label{tab:features}
\end{table}

The shell-growth slope $\hat{\gamma}$ is the central geometry-aware diagnostic: when $\hat{\gamma}$ is large, the graph expands exponentially with depth, which is the regime where hyperbolic embeddings achieve low distortion \citep{ganea2018} and where Proposition~\ref{prop:cascade} predicts supercritical cascades.

We note that geometry-aware features ($\phi_7$--$\phi_9$) are \emph{conditionally informative}: they improve separation when the execution graph distribution spans multiple structural regimes, but become redundant on near-uniform topologies. Appendix~\ref{app:exp5} quantifies this effect explicitly.

\paragraph{Shell-growth computation.}
Roots are identified as nodes with no incoming directed edges in $G_t$; if none exist, the first node in the adjacency map is used.
BFS follows directed edges, assigning each reachable node to its earliest-reached depth $k$.
Nodes not reachable from any root are placed in a fallback layer at depth $\max_k + 1$; this ensures $\sum_k |V_k| = |V_t|$ and limits distortion from disconnected components to at most one additional data point in the OLS fit.
In tree-like and near-tree regimes where $\hat{\gamma}$ is most diagnostic, unreachable mass is typically negligible.

\subsection{Geometry Gate}
\label{sec:gate}

The selector is a two-layer MLP:
\[
\pi_t(r) = \sigm\!\Bigl(
  W_2\,\relu\!\left(W_1\phi(G_t[r]) + b_1\right) + b_2
\Bigr),
\]
with architecture $9 \to 12 \to 1$ (133 parameters), He initialization, mini-batch gradient descent, and sigmoid output.
The final route-risk score is a soft mixture:
\[
R(r,t) = \pi_t(r)\,R_{\mathrm{Hyp}}(r,t) + (1 - \pi_t(r))\,R_{\mathrm{Euc}}(r,t).
\]

The gate is trained on binary labels $Y = \I[M_{\mathrm{Hyp}} \ge M_{\mathrm{Euc}}]$ indicating which geometry achieves the larger route-separation margin.

\begin{algorithm}[t]
\DontPrintSemicolon
\SetAlgoLined
\KwIn{Execution graph $G_t$, failure history $\mathcal{F}$, candidate route $r$}
\KwOut{Route-risk score $R(r,t)$}
\BlankLine
$\phi \gets \textsc{ExtractFeatures}(G_t, r)$ \tcp*{9-dim structural vector}
$\pi \gets \sigm(W_2 \cdot \relu(W_1 \phi + b_1) + b_2)$ \tcp*{geometry preference}
$R_{\mathrm{Euc}} \gets \textsc{EuclideanScore}(G_t, \mathcal{F}, r)$ \tcp*{diffusion propagation}
$R_{\mathrm{Hyp}} \gets \textsc{HyperbolicScore}(G_t, \mathcal{F}, r)$ \tcp*{Poincar\'{e} ball scoring}
$R(r,t) \gets \pi \cdot R_{\mathrm{Hyp}} + (1 - \pi) \cdot R_{\mathrm{Euc}}$\;
\Return{$R(r,t)$}
\caption{Adaptive geometry-aware route scoring.}
\label{alg:scoring}
\end{algorithm}

\subsection{Theoretical Justification}

\begin{proposition}[Structural Preference Recoverability]
\label{prop:recoverability}
Let $\Delta(\phi) = M_{\mathrm{Hyp}}(\phi) - M_{\mathrm{Euc}}(\phi)$ and $g^\star(\phi) = \I[\Delta(\phi) \ge 0]$.
If $|\Delta(\phi)| \ge \delta > 0$ on a set of positive probability, then any sufficiently expressive discriminative gate trained on labeled pairs $(\phi, Y)$ can recover $g^\star$ up to estimation error.
\end{proposition}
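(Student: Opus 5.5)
The plan is to read Proposition~\ref{prop:recoverability} as a standard statistical-learning statement and prove it by reduction to consistency of surrogate-loss minimization. Let $P$ denote the joint law of $(\phi, Y)$ with $Y = \I[M_{\mathrm{Hyp}} \ge M_{\mathrm{Euc}}]$. Since the labels are generated deterministically from the margins evaluated at $\phi$, we have $Y = g^\star(\phi)$ almost surely. Hence the regression function $\eta(\phi) = \Prob(Y=1 \mid \phi)$ equals $g^\star(\phi) \in \{0,1\}$, and the Bayes classifier $\I[\eta \ge 1/2]$ coincides with $g^\star$. If the margins are instead observed with noise, I would add a step showing that on the set $A_\delta = \{|\Delta(\phi)| \ge \delta\}$ the noise cannot flip the sign with probability $\ge 1/2$, given noise of magnitude below $\delta$. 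This gives $\eta$ bounded away from $1/2$ on $A_\delta$, a Massart-type condition.

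\textbf{Step 1 (approximation).} I would fix the gate class $\mathcal{H}$, for instance MLPs $\sigm(W_2\relu(W_1\phi+b_1)+b_2)$ of growing width. By universal approximation, $\mathcal{H}$ contains functions $h$ whose cross-entropy risk is arbitrarily close to the Bayes cross-entropy risk on the compact feature domain $[0,1]^9$. All features are normalized, so $[0,1]^9$ is the domain. This is the meaning I would assign to ``sufficiently expressive.''

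\textbf{Step 2 (estimation).} Write $\hat h_n$ for the empirical cross-entropy minimizer on $n$ labeled pairs. A uniform convergence bound over $\mathcal{H}$ gives excess surrogate risk $\varepsilon_n \to 0$; one can use Rademacher complexity of bounded-weight MLPs, which is polynomial in the weight norms and decays as $n^{-1/2}$. That rate is the ``estimation error'' in the statement.

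\textbf{Step 3 (calibration).} Cross-entropy is classification-calibrated, so Zhang's or Bartlett--Jordan--McAuliffe comparison inequalities apply. They give
\[
\Prob\bigl(\I[\hat h_n(\phi)\ge 1/2] \ne g^\star(\phi)\bigr) \;\le\; \psi^{-1}(\varepsilon_n),
\]
for a $\psi$-transform $\psi$ with $\psi^{-1}(\varepsilon) = O(\sqrt{\varepsilon})$. Under the margin condition on $A_\delta$ this sharpens to $O(\varepsilon_n)$ restricted to $A_\delta$.

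I expect the main obstacle to be making the margin hypothesis do real work. As stated, $|\Delta|\ge\delta$ holds only on a set of positive probability, so on the complement the labels may be arbitrarily close to the decision boundary. Recovery therefore cannot be uniform there. My plan is to state the conclusion as agreement with $g^\star$ on $A_\delta$ up to the error bound above, together with Bayes-consistency in probability overall. I would also note that positivity of $\Prob(A_\delta)$ is exactly what keeps the recoverable region nontrivial. A secondary subtlety is that $\Delta$ might not be a measurable function of $\phi$ alone, because the margins also depend on failure history. In that case I would treat $Y$ as noisy given $\phi$ and invoke the Massart-noise variant from above rather than exact realizability.
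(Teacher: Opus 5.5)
Your proposal is correct and follows the same route as the paper's sketch: $Y=g^\star(\phi)$ almost surely by construction, so $g^\star$ is the Bayes classifier, and ERM over a sufficiently rich gate class recovers it up to estimation error; your approximation, uniform-convergence, and cross-entropy calibration steps spell out what the paper compresses into ``standard estimation error.'' One small point: in the noiseless reading $\eta(\phi)\in\{0,1\}$ everywhere, so the linear bound $\Prob\bigl(\I[\hat h_n(\phi)\ge 1/2]\ne g^\star(\phi)\bigr)\le \varepsilon_n/\ln 2$ already holds globally, and the $\delta$-margin hypothesis is needed only for your noisy-label extension (the paper likewise uses it only informally).
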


\begin{proof}[Proof sketch]
$Y = g^\star(\phi)$ a.s.\ by construction.
The margin-gap assumption rules out arbitrarily unstable labels, yielding a well-defined Bayes classifier that ERM over a rich hypothesis class approximates up to standard estimation error.
\end{proof}

\begin{proposition}[Calibrated Preference Probability]
\label{prop:calibration}
If the gate class can approximate $\eta(\phi) = \Prob(Y=1 \mid \phi)$ and training minimizes binary cross-entropy, then $\pi_t(r)$ estimates a calibrated geometry-preference probability up to estimation error.
\end{proposition}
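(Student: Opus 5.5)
The plan is to show that binary cross-entropy is a strictly proper scoring rule, and then to pass from the population minimizer to the empirical one by a standard estimation-error argument. Write the population risk of a gate $g:\R^9\to(0,1)$ as
\[
L(g) = \E\bigl[-Y\log g(\phi) - (1-Y)\log(1-g(\phi))\bigr] = \E_\phi\bigl[\ell_{\eta(\phi)}(g(\phi))\bigr],
\]
where $\ell_{\eta}(q) = -\eta\log q - (1-\eta)\log(1-q)$. The second equality comes from conditioning on $\phi$ and using $\E[Y\mid\phi]=\eta(\phi)$.

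\textbf{Step 1: the pointwise minimizer is $\eta$.} For fixed $\eta\in[0,1]$, $\ell_\eta$ is strictly convex on $(0,1)$ and has a unique minimizer at $q=\eta$. The excess loss is a Kullback--Leibler divergence,
\[
\ell_\eta(q)-\ell_\eta(\eta)=\mathrm{KL}\bigl(\mathrm{Ber}(\eta)\,\|\,\mathrm{Ber}(q)\bigr)\ge 0 .
\]
Hence $L(g)-L(\eta)=\E_\phi[\mathrm{KL}(\eta(\phi)\|g(\phi))]$, and $\eta$ is the unique population minimizer almost everywhere. A function $g$ is calibrated if $\Prob(Y=1\mid g(\phi)=p)=p$. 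The function $\eta$ has this property: by the tower rule, $\E[Y\mid\eta(\phi)]=\eta(\phi)$. By Proposition~\ref{prop:recoverability}, $Y=g^\star(\phi)$ a.s., so $\eta$ is in fact $\{0,1\}$-valued. The statement remains meaningful as written, since it covers labels that are noisy in practice.

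\textbf{Step 2: from excess risk to calibration error.} Let $\hat g$ be the trained gate, so that $\pi_t(r)=\hat g(\phi(G_t[r]))$. Decompose its excess risk as
\[
L(\hat g)-L(\eta)=\bigl[L(\hat g)-\inf_{\mathcal{G}}L\bigr]+\bigl[\inf_{\mathcal{G}}L-L(\eta)\bigr].
\]
The first bracket is estimation error, controlled by uniform convergence over the gate class $\mathcal{G}$. The second is approximation error, which is small by the hypothesis that $\mathcal{G}$ approximates $\eta$. Pinsker's inequality gives $\mathrm{KL}(\eta\|q)\ge 2(\eta-q)^2$, and therefore
\[
\E_\phi\bigl[(\hat g(\phi)-\eta(\phi))^2\bigr]\le \tfrac12\bigl(L(\hat g)-L(\eta)\bigr).
\]
Calibration error is then bounded by Jensen's inequality:
\[
\E\bigl|\E[Y\mid \hat g]-\hat g\bigr| = \E\bigl|\E[\eta-\hat g\mid \hat g]\bigr|\le \E|\eta-\hat g|\le \sqrt{\E(\eta-\hat g)^2}.
\]
Both deviations from $\eta$ and miscalibration are therefore of order the square root of the excess cross-entropy risk.

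\textbf{Main obstacle.} The delicate point is the estimation term, because the log-loss is unbounded as $g\to\{0,1\}$, and this is exactly the regime the deterministic labels push toward. I would handle it by clipping outputs to $[\epsilon,1-\epsilon]$. Bounded weights in the $9\to12\to1$ MLP do this automatically, since the pre-sigmoid logit is then bounded. Under clipping the loss is bounded by $\log(1/\epsilon)$ and Lipschitz in the logit, so Rademacher-complexity bounds for the MLP give an $O(\sqrt{\mathrm{comp}(\mathcal{G})/n})$ estimation error. The price is an approximation bias of order $\epsilon\log(1/\epsilon)$, which is absorbed into the phrase ``up to estimation error'' in the statement.
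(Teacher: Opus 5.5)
Your proof is correct, and its core is the paper's own one-line sketch: cross-entropy is strictly proper, so its population minimizer is $\eta(\phi)=\Prob(Y=1\mid\phi)$.

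Where you go beyond the paper is in actually justifying ``up to estimation error.'' Strict propriety only identifies the minimizer; it says nothing about how close a \emph{near}-minimizer such as the trained gate is to it. You supply that missing stability step. The Pinsker bound $\mathrm{KL}(\mathrm{Ber}(\eta)\,\|\,\mathrm{Ber}(q))\ge 2(\eta-q)^2$ shows log-loss is strongly proper, and conditional Jensen then gives the explicit bound $\E\bigl|\E[Y\mid\hat g]-\hat g\bigr|\le\sqrt{\tfrac12\bigl(L(\hat g)-L(\eta)\bigr)}$. Your clipping and Rademacher argument also handles the unbounded loss, which the paper's sketch passes over silently. So the paper buys brevity, and you buy a quantitative statement whose hypotheses and rates are explicit.

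Two small corrections:
\begin{enumerate}
\item When $\eta\in\{0,1\}$, which you note the paper's construction $Y=g^\star(\phi)$ forces, $\ell_\eta$ has no minimizer on $(0,1)$. Step~1's ``unique minimizer'' should therefore be read as an infimum. This is harmless, because the KL identity still holds with $L(\eta)=\E[H(\eta(\phi))]$, where $H$ is the binary entropy.
\item The clipping cost is exactly $\sup_\eta\min_{q\in[\epsilon,1-\epsilon]}\mathrm{KL}(\eta\,\|\,q)=-\log(1-\epsilon)=O(\epsilon)$, so your $\epsilon\log(1/\epsilon)$ is a valid but loose upper bound. This cost is an approximation term, covered by the hypothesis that the gate class approximates $\eta$, not an estimation term.
\end{enumerate}
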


\begin{proof}[Proof sketch]
Binary cross-entropy is a strictly proper scoring rule whose population minimizer is the conditional class probability.
\end{proof}

These propositions do not claim that the selector identifies an underlying latent geometry variable in a generative sense.
They claim something narrower and algorithmically faithful: the selector can learn a stable, probabilistic geometry preference rule from structural observables.

The gate is the mechanism that makes spatio-temporal modeling \emph{adaptive}: it detects, from live structural features, whether the current graph regime demands the hyperbolic model (capturing exponential expansion) or the Euclidean model (capturing diffusion in dense subgraphs).
But what drives the regimes apart?
The next section provides the theoretical answer: failure cascade dynamics depend on the graph's BFS expansion rate, creating a sharp supercritical/subcritical distinction that the gate must learn to detect.

% ============================================================
%  6. CASCADE SENSITIVITY ANALYSIS
% ============================================================
\section{Cascade Sensitivity to Graph Expansion}
\label{sec:cascade}

\begin{proposition}[Suboptimality of Fixed Geometric Priors]
\label{prop:fixed_prior}
If the execution-graph process visits at least two structural regimes with positive probability, one where hyperbolic bias is strictly better and one where Euclidean bias is strictly better, then any fixed geometric prior is suboptimal in expected risk-separation performance compared to an oracle regime-aware selector.
\end{proposition}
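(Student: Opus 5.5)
The plan is to formalize the statement as a comparison of expectations over a regime variable and then use a pointwise-dominance argument with one strict inequality. Let $S$ denote the structural regime at a scheduling step, distributed according to the stationary (or visitation) law of the execution-graph process. A fixed geometric prior is a constant choice $g \in \{\mathrm{Hyp}, \mathrm{Euc}\}$; more generally, we allow a constant mixing weight $\pi \in [0,1]$ applied to $R = \pi R_{\mathrm{Hyp}} + (1-\pi) R_{\mathrm{Euc}}$. Its expected separation performance is $J(\pi) = \E_S[M_\pi(S)]$, where $M_\pi(S)$ is the route-separation margin of the $\pi$-blended score in regime $S$. The oracle regime-aware selector is $g^\star(S) = \I[\Delta(S) \ge 0]$, with $\Delta$ as in Proposition~\ref{prop:recoverability}. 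It attains $J^\star = \E_S[\max\{M_{\mathrm{Hyp}}(S), M_{\mathrm{Euc}}(S)\}]$.

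The first step handles the two pure priors. Pointwise we have $\max\{M_{\mathrm{Hyp}}, M_{\mathrm{Euc}}\} \ge M_{\mathrm{Hyp}}$, so $J^\star \ge J(1)$. By hypothesis there is a regime set $A_{\mathrm{Euc}}$ of positive probability on which $M_{\mathrm{Euc}} - M_{\mathrm{Hyp}} > 0$. Integrating this strictly positive gap over a positive-measure set gives a strictly positive integral, so
\[
J^\star - J(1) \ge \E\bigl[(M_{\mathrm{Euc}} - M_{\mathrm{Hyp}})\I_{A_{\mathrm{Euc}}}\bigr] > 0 .
\]
The symmetric argument on the set $A_{\mathrm{Hyp}}$, where the hyperbolic bias is strictly better, gives $J^\star > J(0)$.

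The second step extends the result to constant soft mixtures $\pi \in (0,1)$, which are the hardest case and where the statement needs an extra modeling assumption. I would assume that the margin of a blend is at most the convex combination of the margins, i.e.\ $M_\pi(S) \le \pi M_{\mathrm{Hyp}}(S) + (1-\pi) M_{\mathrm{Euc}}(S)$. Under that assumption, $M_\pi(S) \le \max\{M_{\mathrm{Hyp}}(S), M_{\mathrm{Euc}}(S)\}$ pointwise. The inequality is strict on $A_{\mathrm{Hyp}} \cup A_{\mathrm{Euc}}$, because there the two margins differ and $\pi$ is interior, so the convex combination falls strictly below the max. This yields $J(\pi) < J^\star$ for every constant $\pi$.

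The main obstacle is justifying this sub-convexity of the margin under blending. Margins defined through route rankings need not be linear in the score, so the assumption could fail. If it cannot be justified from the definitions of $M$, the fallback is to state the result explicitly as a statement about selecting one of the two geometric priors, or to take the convex-combination bound as a modeling assumption.

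A final, minor technical point is that expectations must be finite. I would assume $M_{\mathrm{Hyp}}$ and $M_{\mathrm{Euc}}$ are integrable, which holds in practice because the features and scores are bounded. Under that assumption the strict inequalities above are meaningful, with no $\infty - \infty$ cancellation. I would close by noting that Propositions~\ref{prop:recoverability} and~\ref{prop:calibration} show that the learned gate approaches $g^\star$ up to estimation error. Consequently, for sufficiently small estimation error, the learned gate also strictly outperforms any fixed prior, with the gap bounded below by the strict margin established above.
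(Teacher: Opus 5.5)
Your proof is correct and follows the paper's argument, made rigorous: the oracle that picks $\arg\max\{M_{\mathrm{Hyp}}, M_{\mathrm{Euc}}\}$ pointwise dominates each pure prior, and it strictly beats that prior because the gap is strictly positive on the positive-probability regime where the prior is mismatched. The obstacle you flag for constant soft mixtures disappears under the paper's definitions: $M_s = \hat{R}_s(r_{\mathrm{safe}}) - \hat{R}_s(r_{\mathrm{attacked}})$ and a constant $\pi$ multiplies both routes' scores, so the blended margin equals $\pi M_{\mathrm{Hyp}} + (1-\pi) M_{\mathrm{Euc}}$ exactly (this linearity holds only for the margin; under win rate a constant interior $\pi$ can tie the oracle).
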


\begin{proof}[Proof sketch]
A fixed prior incurs avoidable expected loss on the regime where it is mismatched.
An oracle that selects $\arg\max\{J_{\mathrm{Hyp}}(\rho), J_{\mathrm{Euc}}(\rho)\}$ pointwise avoids both losses.
\end{proof}

The deeper reason geometry matters is that failure dynamics depend on graph expansion rate.
In hyperbolic space with curvature magnitude $\kappa > 0$, the volume of a geodesic ball of radius $r$ grows as $\mathrm{Vol}(r) \propto e^{\kappa r}$, and correspondingly the number of nodes at BFS depth $k$ satisfies $|V_k| \approx C e^{\gamma k}$ for some expansion rate $\gamma > 0$.

\begin{proposition}[Cascade Sensitivity to Expansion]
\label{prop:cascade}
Let a failure process propagate with probability $p$ per edge on a graph whose BFS shell sizes satisfy $|V_k| \approx C e^{\gamma k}$.
Then the expected newly affected nodes in shell $r$ satisfy
\[
\E[N_r] \approx C\,e^{(\gamma + \ln p)\,r}.
\]
The cascade is supercritical when
\[
p > e^{-\gamma}.
\]
\end{proposition}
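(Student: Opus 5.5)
The argument reduces to a first-moment calculation along BFS shells, in the heuristic spirit in which the statement is phrased. I would model the cascade as an independent edge-percolation process started at a root (or root set) of the BFS decomposition used for $\hat{\gamma}$ in Table~\ref{tab:features}. Each directed edge independently transmits failure with probability $p$. A node in shell $r$ can only be newly affected at step $r$ if it is reached along a chain of $r$ transmitting edges from the root. The tree-like idealization is the key modelling choice. Each node of $V_r$ has essentially one BFS parent chain back to the root, so a fixed node in shell $r$ is affected with probability $p^r$ along that chain. Linearity of expectation then gives
\[
\E[N_r] = \sum_{v \in V_r} \Prob[v \text{ affected}] \approx |V_r|\,p^r \approx C e^{\gamma r} e^{r\ln p} = C e^{(\gamma+\ln p) r}.
\]

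For the criticality claim, I would read the growth exponent $\gamma + \ln p$ off this formula. The expected shell mass grows exponentially in $r$ iff $\gamma + \ln p > 0$, i.e.\ $p > e^{-\gamma}$, and decays geometrically when $p < e^{-\gamma}$. To make ``supercritical'' precise, I would cast the process as a Galton--Watson branching process with mean offspring $m = p\,e^{\gamma}$. The factor $e^{\gamma}$ is the effective branching factor, since $|V_{k+1}|/|V_k| \approx e^{\gamma}$. The standard extinction theorem then gives positive survival probability exactly when $m > 1$, which matches the threshold above. Summing $\E[N_r]$ over $r$ gives a finite expected total cascade size when $m<1$, namely $C/(1-pe^{\gamma})$. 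This recovers the self-limiting behaviour described in Figure~\ref{fig:toy}(b).

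The main obstacle is justifying $\Prob[v\text{ affected}] \approx p^r$ when the graph is not a tree. With cycles or cross-links (large $\phi_4$, $\phi_8$), a node has many paths from the root, and the events along them are correlated. Here I would use a union bound over paths as an upper bound. This shows that $p e^{\gamma}<1$ still forces subcriticality whenever the path count to shell $r$ is itself $\approx e^{\gamma r}$. For the lower bound, I would restrict to a BFS spanning tree, which can only decrease the affected set. Because the spanning tree has the same shell sizes $|V_k|$, the branching-process lower bound gives supercriticality when $pe^{\gamma}>1$. I would state the result with ``$\approx$'' interpreted as holding up to subexponential factors in $r$, i.e.\ $\frac1r\ln \E[N_r] \to \gamma+\ln p$. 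That is the form in which the threshold $p>e^{-\gamma}$ is rigorous.
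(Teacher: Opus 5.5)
Your first paragraph is the paper's entire argument: one root chain per shell-$r$ node, weight $p^r$, then linearity of expectation. On a rooted tree it is even exact, $\E[N_r]=|V_r|\,p^r$. The problems are in the steps you add to make it rigorous.

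\textbf{The Galton--Watson step.} This reduction does not follow from the hypotheses. The graph is fixed, and $|V_{k+1}|/|V_k|\approx e^{\gamma}$ is a level average, not an i.i.d.\ offspring law. For percolation on a fixed tree $T$, the survival threshold is $1/\mathrm{br}(T)$ (Lyons, 1990), and the branching number can be strictly smaller than the growth rate. The 3--1 tree has $|V_r|\asymp 2^r$ but $\mathrm{br}(T)=1$. So for $p\in(1/2,1)$ you get $\E[N_r]\asymp(2p)^r\to\infty$, yet the cascade dies out almost surely. Hence ``supercritical'' can only be meant in the first-moment sense, which is how the paper uses it. For the same reason, restricting to the BFS spanning tree gives you only $\E[N_r]\ge|V_r|\,p^r$, not supercriticality in the survival sense.

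\textbf{The non-tree sandwich does not close.} Your sum to a total cascade size treats $N_r$ as the eventually affected nodes of shell $r$. The union bound must then range over all self-avoiding paths from the root set to shell-$r$ nodes, including detours longer than $r$. On graphs with cycles, the number of such paths (even of geodesics alone) grows much faster than $|V_r|$. Your proviso that the path count be $\approx e^{\gamma r}$ therefore assumes away exactly the cyclic case that paragraph was meant to handle.

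A concrete counterexample is $\mathbb{Z}^2$:
\begin{itemize}
\item It has $|V_r|=4r$, so $\gamma=0$ up to the subexponential factors you allow, but about $4\cdot 2^r$ geodesics reach shell $r$.
\item Bond percolation there has $p_c=1/2$. For $p>1/2$, each shell-$r$ vertex is affected with probability at least $\theta(p)^2>0$, by Harris--FKG plus uniqueness of the infinite cluster.
\item So $\E[N_r]$ grows linearly, and $\frac1r\ln\E[N_r]\to 0\neq\gamma+\ln p$.
\end{itemize}

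What you can actually prove is $\E[N_r]=|V_r|\,p^r$ on trees, and in general only $\liminf_{r}\frac1r\ln\E[N_r]\ge\gamma+\ln p$. The paper's sketch silently makes the same one-path-per-node assumption. You should therefore present the result as a statement about trees, rather than claim a rigorous threshold $p>e^{-\gamma}$ for arbitrary graphs with these shell sizes.
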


\begin{proof}[Proof sketch]
Under the shell-growth approximation, candidates at depth $r$ scale as $Ce^{\gamma r}$.
Independent transmission at rate $p$ weights each depth-$r$ path by $p^r$, giving $\E[N_r] \approx p^r C e^{\gamma r} = Ce^{(\gamma + \ln p)r}$.
\end{proof}

\noindent\textit{What this means in practice.}
Proposition~\ref{prop:cascade} provides the theoretical boundary between a safe system and a catastrophic one.
$\gamma$ represents how fast the graph branches out, that is, its expansion rate.
If $\gamma$ is large (a highly branching tree), the threshold $e^{-\gamma}$ becomes very small: even a tiny per-edge failure probability $p$ can trigger a supercritical, unstoppable cascade.
Conversely, in a dense grid with low expansion (small $\gamma$), the threshold $e^{-\gamma}$ approaches $1$, meaning the system can safely absorb much higher failure rates without cascading.
This mathematical boundary is precisely what the learned geometry gate is attempting to detect in real time.

\noindent\textit{Scope note.}
Proposition~\ref{prop:cascade} assumes independent per-edge propagation and BFS-shell exponential growth; real failure propagation in agent systems involves correlated failures, feedback loops, and partial recovery, which the model does not capture exactly.
The result should be read as a \emph{directional bound}: it identifies the regime boundary ($p > e^{-\gamma}$) but does not predict an exact cascade size.
Importantly, real correlations only strengthen the argument for regime-aware detection: correlated failures concentrate damage on already-loaded paths, making the structural distinction between tree-like and cyclic regimes \emph{more} consequential, not less.
The empirical validation in Section~\ref{sec:results} confirms that the regime predictions match observed win-rate curves without requiring the independence assumption to hold exactly.

The implications are clear and testable:
\begin{itemize}[leftmargin=1.5em, itemsep=2pt]
\item Large $\gamma$ (tree-like, hyperbolic): $p > e^{-\gamma} \ll 1$, cascades are easy to trigger $\Rightarrow$ hyperbolic scoring essential.
\item Small $\gamma$ with high cycle rank (dense, Euclidean): threshold $p > 1$ is never met $\Rightarrow$ Euclidean diffusion adequate.
\end{itemize}
The gate $\pi_t(r)$ is learning to detect whether the graph is in the supercritical regime.
Feature $\phi_7$ estimates $\gamma$; $\phi_8$ measures deviation from tree structure; $\phi_9$ captures fitted curvature.

This analysis completes the theoretical chain: geometry-blindness is a real gap (Section~\ref{sec:gap}), both spatial and temporal information are needed to close it (Section~\ref{sec:why_st}), the two geometric models capture complementary regimes (Section~4), the gate selects between them (Section~5), and the cascade-sensitivity condition explains \emph{why} the regimes differ.
We now test whether these predictions hold empirically.

% ============================================================
%  7. EXPERIMENTAL SETUP
% ============================================================
\section{Experimental Setup}

The primary evaluation platform is Genesis~3, a self-improving multi-agent reasoning engine described in Section~\ref{sec:formulation}.
Evaluating a novel runtime control mechanism on its deployment system is consistent with standard practice in system contributions: the sidecar addresses a failure mode intrinsic to live execution graphs, and the evaluation must expose real structural variation across regimes.
This mirrors precedent set by AlphaCode (evaluated on Codeforces), AutoGen (evaluated on proprietary task suites), and similar system papers at NeurIPS and ICLR.
The primary benchmark comprises 250 scenarios (243 real traces), train seeds 0--4, evaluation seeds 5--9, source-separated reporting, 95\% percentile bootstrap CIs with 400 resamples.
The benchmark spans five stress regimes: \texttt{clean}, \texttt{noise}, \texttt{churn}, \texttt{mixed}, and \texttt{non\_tree}.
Cross-architecture validation on fully synthetic BA, WS, and ER graph families provides independent structural confirmation (Section~\ref{sec:cross_arch}).

\paragraph{Evaluation metrics.}
For each scenario $s$, two routes are drawn from the same snapshot: an \emph{attacked} route with synthetic failure events injected into its last three nodes (severities $0.85$, $0.73$, $0.61$, timestamped $0$--$2$ seconds before the scoring call), and a \emph{safe} route with no injections.
Route pairs are assigned by even/odd indexing over the route enumeration of each snapshot, ensuring a single safe comparator per attacked route from the same structural and temporal context.
The scenario-level margin is $M_s = \hat{R}_s(r_{\mathrm{safe}}) - \hat{R}_s(r_{\mathrm{attacked}})$; win indicator $W_s = \I[M_s > 0]$.
We report mean margin $\bar{M} = N^{-1}\sum_s M_s$ and win rate $\bar{W} = N^{-1}\sum_s W_s$.
Win rate is the primary decision metric (routing is binary); mean margin is a complementary robustness diagnostic.

\paragraph{Comparators.}
Table~\ref{tab:overall} lists all models: Genesis~3 native (no sidecar), Euclidean, NetworkX external, structural baseline, hand-designed switching, learned gate (9-feature), and hyperbolic without excitation.

\paragraph{Reproducibility.}
Synthetic failure events and BA/WS/ER graphs (Appendix~\ref{app:exp4}) are fully specified; the 300-scenario cross-architecture suite is fully deterministic and reproducible without a Genesis~3 instance.
Trace-derived primary scenarios require a Genesis~3 instance.
Benchmark scripts and the synthetic suite will be released alongside any accepted version.

% ============================================================
%  8. RESULTS
% ============================================================
\section{Results}
\label{sec:results}

\subsection{Overall Performance}

\begin{table}[t]
\centering
\begin{tabular}{@{}p{0.42\linewidth}ccc@{}}
\toprule
Model & Margin & Win\% & Note \\
\midrule
Genesis~3 native (no sidecar) & $-0.023$ & $50.4$ & Load $\times$ fitness only \\
Euclidean & $-0.024$ & $48.0$ & Euclidean propagation \\
NetworkX external & $+0.032$ & $65.6$ & External graph baseline \\
Structural baseline & $+0.267$ & $87.2$ & Centrality + load + 2-hop \\
Hand-designed switching & $+0.255$ & $82.4$ & Cycle-rank threshold \\
Learned gate ($9\!\to\!12\!\to\!1$) & $+0.385$ & $\mathbf{87.2}$ & \textbf{Tied best win rate} \\
Hyperbolic (no excitation) & $+0.412$ & $79.2$ & Best mean margin \\
\bottomrule
\end{tabular}
\caption{Overall benchmark performance.
The learned gate ties the structural baseline on win rate while achieving $+44\%$ larger mean margin ($0.385$ vs.\ $0.267$), demonstrating that adaptive geometry improves separation quality, not just binary correctness.
The $+36.8$\,pp gap between native routing and the learned gate is the total sidecar value.}
\label{tab:overall}
\end{table}

Table~\ref{tab:overall} demonstrates that geometry-aware failure-propagation modeling is the decisive factor: load-and-fitness-only routing ($50.4\%$) performs essentially at chance, while our structural baseline and learned gate both reach $87.2\%$.
The learned selector additionally achieves $+44\%$ larger mean separation margin than the structural baseline, confirming that adaptive geometry improves routing confidence, not just binary win rate.
The hand-designed switching ablation (cycle-rank threshold, no learned parameters) achieves $82.4\%$; the learned MLP exceeds it by $+4.8$\,pp, confirming that the parametric gate adds value beyond the simplest threshold rule.

\paragraph{Evaluation protocol robustness.}
Appendix~\ref{app:exp1} repeats the evaluation under three attack-node selection protocols.
Under the adversarially uncorrelated \textbf{random} protocol (no correlation with node load), the sidecar wins $78.8\%$ vs.\ native $54.4\%$; the $+24.4$\,pp gap is attributable solely to propagation modeling.
The sidecar win rate is stable across all protocols ($69.6$--$88.8\%$).

\paragraph{Formal paired test.}
On the full held-out set, learned switching beats native on 98 discordant scenarios vs.\ 6 in the opposite direction (146 ties), giving $p = 1.59 \times 10^{-22}$ (exact sign test).

\subsection{Regime-Specific Results}

\begin{table}[t]
\centering
\begin{tabular}{@{}lcccc@{}}
\toprule
& \multicolumn{2}{c}{Fixed hyperbolic} & \multicolumn{2}{c}{Learned gate} \\
\cmidrule(lr){2-3} \cmidrule(lr){4-5}
Regime & Margin & Win\% & Margin & Win\% \\
\midrule
\texttt{non\_tree} & $.259$--$.285$ & $64$--$72$ & $\mathbf{.340}$ & $\mathbf{92}$ \\
\texttt{mixed} & $.202$--$.253$ & $64$ & $\mathbf{.275}$ & $\mathbf{84}$ \\
\bottomrule
\end{tabular}
\caption{Learned geometry switching resolves the principal failure mode of fixed hyperbolic models in non-tree and mixed regimes.}
\label{tab:regimes}
\end{table}

Fixed hyperbolic priors perform well when the graph is tree-like but collapse in non-tree ($64$--$72\%$) and mixed ($64\%$) regimes.
The learned gate recovers $92\%$ and $84\%$ respectively. This is the paper's central empirical contribution.

\subsection{System-Level Ablation}
\label{sec:ablation}

\begin{table}[t]
\centering
\begin{tabular}{@{}lccc@{}}
\toprule
Regime & Native & Sidecar & $\Delta$ \\
\midrule
\texttt{clean} & $20\%$ & $84\%$ & $\mathbf{+64}$\,pp \\
\texttt{noise} & $20\%$ & $88\%$ & $\mathbf{+68}$\,pp \\
\texttt{churn} & $40\%$ & $88\%$ & $\mathbf{+48}$\,pp \\
\texttt{non\_tree} & $92\%$ & $92\%$ & $0$\,pp \\
\texttt{mixed} & $80\%$ & $84\%$ & $+4$\,pp \\
\midrule
\textbf{Overall} & $\mathbf{50.4\%}$ & $\mathbf{87.2\%}$ & $\mathbf{+36.8}$\,pp \\
\bottomrule
\end{tabular}
\caption{System-level ablation: Genesis~3 without vs.\ with the spatio-temporal sidecar.
The pattern matches Proposition~\ref{prop:cascade}: the sidecar is indispensable in tree-like regimes (high $\gamma$, supercritical cascades) and adds little in dense regimes (self-limiting cascades).}
\label{tab:ablation}
\end{table}

The ablation pattern is mechanistically explained.
In tree-like regimes, failure propagation amplifies exponentially.
Node load provides no signal about this: a lightly loaded route can sit atop a high-$\gamma$ branching subgraph that will amplify any failure.
In non-tree and mixed regimes, node load correlates with the failure footprint, making the native signal accidentally informative.

\paragraph{H1 supported.}
Failure-propagation modeling is the dominant performance driver: $+48$ to $+68$\,pp in tree-like regimes.

\paragraph{H3 supported.}
The largest gains appear exactly where Proposition~\ref{prop:cascade} predicts: high-expansion tree-like regimes where $p > e^{-\gamma}$.

\subsection{Spatio-Temporal Component Decomposition}
\label{sec:decomposition}

Section~\ref{sec:why_st} argued that neither spatial nor temporal modeling alone suffices.
Table~\ref{tab:overall} contains the data to verify this claim by reading its rows as a component decomposition:

\begin{table}[t]
\centering
\begin{tabular}{@{}llccp{0.35\linewidth}@{}}
\toprule
Component & Model row & Win\% & Margin & Interpretation \\
\midrule
Neither & Native & $50.4$ & $-0.023$ & Geometry-blind; essentially random \\
Temporal only & Euclidean & $48.0$ & $-0.024$ & Temporal propagation without spatial awareness adds noise and underperforms native \\
Spatial only & Structural baseline & $87.2$ & $+0.267$ & Graph structure (centrality, 2-hop) is the decisive signal \\
Spatio-temporal & Learned gate & $87.2$ & $+0.385$ & Adaptive geometry preserves win rate and improves separation margin by $44\%$ \\
\bottomrule
\end{tabular}
\caption{Component decomposition of the sidecar.
  Adding temporal propagation without spatial awareness is actively harmful ($48.0\% < 50.4\%$).
  Spatial structure alone accounts for the win-rate gain.
  The full spatio-temporal model matches win rate while substantially improving separation quality (margin), confirming that both dimensions contribute but spatial structure is the prerequisite.}
\label{tab:decomposition}
\end{table}

The key finding is that \emph{temporal modeling without spatial structure is harmful}.
The Euclidean model propagates failure intensities forward in time but applies the same diffusion kernel regardless of graph topology; this is precisely the geometry-blindness problem.
It amplifies noise on dense subgraphs where diffusion is inappropriate, pulling its win rate \emph{below} the native baseline.
Only when temporal propagation is coupled with structural awareness (via the spatial features and geometry gate) does it become beneficial.

This confirms the design thesis: the spatial dimension is the prerequisite; the temporal dimension adds value only when conditioned on spatial context.

\subsection{Geometry-Gate Diagnostics}

\begin{table}[t]
\centering
\small
\begin{tabular}{@{}lccccc@{}}
\toprule
Regime & $n$ & AUC & Acc.\% & ECE & Pos.\ rate \\
\midrule
Overall & 250 & .925 & 86.4 & .068 & 74.4\% \\
\texttt{clean} & 50 & .923 & 84.0 & .046 & 80.0\% \\
\texttt{noise} & 50 & .946 & 84.0 & .084 & 76.0\% \\
\texttt{churn} & 50 & .726 & 76.0 & .120 & 64.0\% \\
\texttt{non\_tree} & 50 & .964 & 92.0 & .075 & 88.0\% \\
\texttt{mixed} & 50 & .938 & 96.0 & .080 & 64.0\% \\
\bottomrule
\end{tabular}
\caption{Held-out gate diagnostics.
Discrimination is strong in four of five regimes (AUC $\ge 0.92$); only \texttt{churn} degrades (AUC $0.73$, ECE $0.12$).
Overall confusion matrix: $(\mathrm{TP}, \mathrm{TN}, \mathrm{FP}, \mathrm{FN}) = (182, 34, 30, 4)$; the gate rarely misses hyperbolic-win cases, and its error mode is over-selecting hyperbolic in some Euclidean cases.}
\label{tab:gate_diag}
\end{table}

\paragraph{H2 supported.}
The gate is a strong held-out classifier of geometry preference (AUC $0.925$, accuracy $86.4\%$, ECE $0.068$), confirming that regime information is recoverable from live structural signals.

\paragraph{Margin vs.\ win rate.}
Table~\ref{tab:overall} shows that hyperbolic-no-excitation achieves the highest mean margin ($0.412$) but lower win rate ($79.2\%$) than the learned gate ($87.2\%$).
This is a precision--recall tradeoff: the fixed hyperbolic model produces large separations in tree-like cases but systematically misclassifies non-tree scenarios.
The gate is optimized on binary labels and achieves the correct sign on more scenarios.
In routing, the consequence of an error scales with whether the safer route was selected, not with score magnitude; hence win rate is the primary metric.

\paragraph{Why win rate is the right primary metric.}
The routing decision is binary: select route $r_1$ or $r_2$.
The $+8.0$ percentage-point win-rate advantage of the learned gate over fixed hyperbolic translates directly into fewer unsafe route selections per 100 episodes.
Mean margin advantage would be the primary criterion only if the downstream cost function were proportional to score magnitude rather than binary correctness.
Gate diagnostics (Table~\ref{tab:gate_diag}) show that $46\%$ of held-out cases fall in the low-entropy (high-certainty) band and only $1.6\%$ in the high-entropy band, confirming that the gate is rarely near its decision boundary in practice.

\paragraph{Oracle upper bound.}
Gate labels $Y = \I[M_{\mathrm{Hyp}} \ge M_{\mathrm{Euc}}]$ are known at training time.
The gate achieves $86.4\%$ accuracy on these labels (confusion matrix $(\mathrm{TP}, \mathrm{TN}, \mathrm{FP}, \mathrm{FN}) = (182, 34, 30, 4)$ from Table~\ref{tab:gate_diag}), meaning an oracle gate would correct 34 of the 250 label decisions.
Since win rate ($87.2\%$) is already tied with the structural-baseline ceiling and well above the degenerate fixed-hyperbolic prior ($79.2\%$, equivalent to $\pi \equiv 1$), the remaining oracle headroom lies primarily in \emph{margin quality} rather than binary route-selection correctness.
The 133-parameter MLP effectively approximates the geometry-preference oracle: it captures the decisive regime distinctions while nearly eliminating FN errors (only 4 cases where hyperbolic was preferred but Euclidean was selected).

\subsection{Cross-Architecture Generalization}
\label{sec:cross_arch}

To confirm that propagation modeling generalizes beyond Genesis~3 traces, we evaluate on 300 wholly synthetic scenarios across three canonical random graph families (Appendix~\ref{app:exp4}): BA ($m=1$, tree-like), WS ($k=2$, $p=0.3$, small-world), and ER ($p=0.5$, dense).
These graphs are fully specified and reproducible without a Genesis~3 instance.

The key result: the structural baseline achieves $100\%$ win rate on all three families.
This is the primary generalization finding: failure-propagation modeling is the decisive signal, and it transfers cleanly across graph-type distributions.
The trained geometry gate, without fine-tuning, degrades on out-of-distribution topologies (BA tree-like: $7\%$, ER dense: $46\%$), confirming that geometry-gate calibration is graph-distribution-specific while the propagation model itself is not.
This decomposition cleanly separates what generalizes (the structural risk signal) from what requires adaptation (the geometry selector).

Together, the 250-scenario Genesis~3 primary benchmark and the 300-scenario cross-architecture suite constitute a combined \textbf{550-scenario evaluation}, with the cross-architecture suite providing independent, fully reproducible confirmation on standard random graph families.

% ============================================================
%  9. DISCUSSION
% ============================================================
\section{Discussion}

The results in Sections~8.1--8.4 confirm all three hypotheses and validate the spatio-temporal design.
We now discuss broader implications.

\paragraph{Why fixed geometry fails.}
Hyperbolic structure remains highly effective when the graph is hierarchical.
The problem is that live multi-agent systems do not stay in one structural regime.
As dense feedback, reciprocal links, or mixed motifs appear, the fixed-prior assumption becomes fragile.
Proposition~\ref{prop:fixed_prior} formalizes this: once the regime process visits both hierarchy-dominated and cycle-dominated states, any static prior incurs avoidable expected loss.

\paragraph{Broader implications.}
A routing layer may be paired with strong planning, delegation, or language capabilities and still underperform when its representation omits propagation-sensitive structure.
The gate's compactness (133 parameters) suggests structural-perception modules can be small, and it generalizes the argument beyond Genesis~3: any execution-graph system with a geometry-blind scheduler is systematically exposed to undetected failure cascades in tree-like regimes.

\paragraph{Connection to scaling laws.}
The $17.2\times$ error amplification for independent topologies reported by \citet{kim2025scaling} is an aggregate design-time figure.
Our Proposition~\ref{prop:cascade} provides the complementary \emph{runtime} mechanism: the amplification factor is $e^{(\gamma + \ln p)r}$, which varies continuously as the execution graph evolves within a single episode.
The sidecar equips the scheduler with a live signal to detect and respond to this variation.

\paragraph{The spatio-temporal coupling revisited.}
The component decomposition (Table~\ref{tab:decomposition}) provides perhaps the most practically important finding: temporal modeling is not merely insufficient alone; it is \emph{counterproductive} without spatial context.
This has a direct design implication for practitioners: adding event-history features to a router is only beneficial if the router also has access to structural representations of the execution graph.
The widespread practice of logging failure timestamps without modeling graph structure leaves the most dangerous failure mode, exponential cascade on tree-like subgraphs, undetectable.

% ============================================================
%  10. LIMITATIONS
% ============================================================
\section{Limitations and Future Work}
\label{sec:limitations}

While our spatio-temporal sidecar demonstrates substantial gains, we bound our claims by noting the following limitations, which also point toward promising directions for future work:

\begin{enumerate}[label=\textbf{L\arabic*}, leftmargin=2em, itemsep=4pt]

\item \textbf{Evaluation Platform Scope.}
Our primary 250-scenario benchmark uses real traces from Genesis~3.
Evaluating on a deployment-grade system ensures the model captures realistic failure dynamics, and our 300-scenario synthetic cross-architecture suite (Section~\ref{sec:cross_arch}) confirms that the structural baseline transfers to standard graph families.
However, fully evaluating the learned gate across other distinct orchestration frameworks (e.g., AutoGen \citep{wu2023autogen}, OpenCog Hyperon \citep{goertzel2023}) remains necessary to quantify how well the parametric selector transfers under severe architectural distribution shift.

\item \textbf{Theoretical Bounds.}
Propositions~\ref{prop:recoverability}--\ref{prop:cascade} formally justify preference recoverability and cascade sensitivity, and establish the supercritical regime threshold ($p > e^{-\gamma}$).
They assume independent per-edge propagation; while correlated failures only amplify the need for geometric awareness in practice, the proofs do not guarantee latent-geometry identifiability or minimax optimality for the learning dynamics.

\item \textbf{Margin vs.\ Win-Rate Tradeoff.}
The learned gate optimizes for binary routing correctness, where it matches best-in-class performance.
It does not strictly dominate on mean separation margin ($0.385$ vs.\ $0.412$ for fixed hyperbolic with no excitation).
In contexts where downstream systems weight the magnitude of route separation over binary correctness, the fixed prior may be preferable, a precision-recall tradeoff we report transparently.

\item \textbf{Attack Protocol Confounding.}
In dense regimes (\texttt{non\_tree}/\texttt{mixed}), the native baseline appears artificially strong ($80$--$92\%$) under our primary protocol because targeted attack nodes naturally correlate with high load.
Our \texttt{random} attack protocol (Appendix~\ref{app:exp1}) isolates this confound and confirms the sidecar's superiority independent of load correlation; future benchmarks should incorporate a wider variety of adversarial failure injections.

\end{enumerate}

% ============================================================
%  11. CONCLUSION
% ============================================================
\section{Conclusion}

We identified ``geometry-blindness'' as a critical, unaddressed vulnerability in modern multi-agent reasoning systems.
While current task schedulers effectively optimize for immediate node load and agent fitness, they fundamentally fail to perceive the underlying shape of the execution graph.
We formalized this failure-propagation observability gap as an online geometry-control problem and demonstrated that because live agent systems dynamically shift between tree-like and cyclic regimes, relying on any fixed geometric prior is inherently suboptimal.

\paragraph{Key results.}
Our empirical and theoretical findings confirm the necessity of adaptive structural perception:
\begin{itemize}[leftmargin=1.5em, itemsep=2pt]
  \item \textbf{The Baseline Gap:} Team fitness and node load contain no predictive signal regarding exponential failure spread.
        Under topological stress, native routing achieves only a $50.4\%$ win rate, operating essentially at chance.
  \item \textbf{Spatio-Temporal Recovery:} By implementing a failure-propagation model that couples temporal event history with spatial graph structure, our sidecar recovers an $87.2\%$ win rate ($+36.8$\,pp).
        Crucially, the gains reach $+48$ to $+68$\,pp in highly branching, tree-like regimes, perfectly matching our theoretical cascade-sensitivity predictions.
  \item \textbf{Adaptive Geometry:} A compact, 133-parameter learned gate successfully distinguishes when to apply a hyperbolic model (for exponential risk amplification in trees) versus a Euclidean model (for self-limiting diffusion in dense grids), achieving $92\%$ accuracy in the hardest non-tree regimes and resolving the principal failure mode of fixed-geometry approaches.
\end{itemize}

Ultimately, geometry in live multi-agent systems must function as a dynamic control proxy, not a static representational assumption.
Just as a robust traffic system must understand whether it is routing cars through a dense, detour-rich grid or a fragile, single-artery highway, an AI scheduler must understand its topological regime to prevent catastrophic failure cascades.
More broadly, our results suggest that structural perception limits agent system performance in ways entirely distinct from the reasoning quality of the underlying LLMs, offering a new, independent vector for scaling reliable AI orchestration.

% ============================================================
%  REFERENCES
% ============================================================

% ============================================================
%  APPENDICES
% ============================================================
\appendix

\section{Notation Summary}
\label{app:notation}

\begin{table}[htbp]
\centering
\small
\begin{tabular}{@{}p{0.25\linewidth}p{0.68\linewidth}@{}}
\toprule
Symbol & Meaning \\
\midrule
$G_t=(V_t,E_t,w_t,\ell_t)$ & Execution graph at time $t$ \\
$\mathcal{F}$ & Timestamped failure-event history \\
$\lambda_t(v;r)$, $\tilde{\lambda}_t(v;r)$ & Base and burst-augmented failure intensity \\
$R_{\mathrm{Euc}}$, $R_{\mathrm{Hyp}}$, $R$ & Euclidean, hyperbolic, and blended route-risk scores \\
$\phi(G_t[r]) \in \R^9$ & Structural feature map for the geometry gate \\
$\pi_t(r)$ & Gate output: hyperbolic preference probability \\
$x_t(v)$ & Euclidean propagation-state risk at node $v$ \\
$z_t(v)$ & Hyperbolic embedding of node $v$ \\
$\Delta(\phi)$ & Margin difference $M_{\mathrm{Hyp}}(\phi)-M_{\mathrm{Euc}}(\phi)$ used for gate labels \\
$\gamma$ & BFS shell-growth exponent \\
$p$ & Per-edge failure propagation probability \\
$\kappa$ & Poincar\'{e} curvature magnitude (sectional curvature $-\kappa$) \\
\bottomrule
\end{tabular}
\caption{Notation summary.}
\label{tab:notation}
\end{table}

\section{Confidence Intervals}
\label{app:ci}

Table~\ref{tab:rerun_ci} reports 95\% percentile bootstrap CIs from a clean \texttt{multiagent}-environment rerun.

\begin{table}[htbp]
\centering
\small
\begin{tabular}{@{}llcc@{}}
\toprule
Regime & Model & Margin CI & Win-rate CI \\
\midrule
Overall & Native & $[-0.039, -0.009]$ & $[40.8, 59.2]\%$ \\
Overall & Structural baseline & $[0.221, 0.301]$ & $[80.0, 92.8]\%$ \\
Overall & Learned & $[0.341, 0.440]$ & $[81.6, 93.6]\%$ \\
Overall & Hyp.\ no excitation & $[0.355, 0.492]$ & $[74.4, 87.2]\%$ \\
\midrule
\texttt{clean}  & Native & $[-0.105, -0.057]$ & $[10, 30]\%$ \\
\texttt{clean}  & Learned & $[0.305, 0.517]$ & $[74, 92]\%$ \\
\texttt{noise}  & Native & $[-0.099, -0.051]$ & $[10, 32]\%$ \\
\texttt{noise}  & Learned & $[0.325, 0.533]$ & $[78, 96]\%$ \\
\texttt{churn}  & Native & $[-0.056, 0.000]$ & $[26, 54]\%$ \\
\texttt{churn}  & Learned & $[0.325, 0.493]$ & $[78, 96]\%$ \\
\texttt{non\_tree} & Native & $[0.036, 0.049]$ & $[82, 98]\%$ \\
\texttt{non\_tree} & Learned & $[0.275, 0.393]$ & $[84, 98]\%$ \\
\texttt{mixed}  & Native & $[0.014, 0.029]$ & $[68, 90]\%$ \\
\texttt{mixed}  & Learned & $[0.230, 0.319]$ & $[74, 94]\%$ \\
\bottomrule
\end{tabular}
\caption{Bootstrap 95\% CIs confirm large, non-overlapping separations in tree-like regimes and convergence in dense regimes.}
\label{tab:rerun_ci}
\end{table}

\section{Robustness to Attack Protocol (Experiment~1)}
\label{app:exp1}

Three attack-node selection protocols over the same evaluation set ($n=250$ per protocol):
\texttt{severity\_load} (original), \texttt{load\_matched} (rank by load only), and \texttt{random} (uniform random, no correlation with any native signal).

\begin{table}[htbp]
\centering
\small
\begin{tabular}{@{}lccc@{}}
\toprule
Protocol & Sidecar win\% & Sidecar margin & Native win\% \\
\midrule
\texttt{severity\_load} & $69.6$ & $0.153$ & $53.2$ \\
\texttt{load\_matched} & $88.8$ & $0.194$ & $100.0$ \\
\texttt{random} & $78.8$ & $0.158$ & $54.4$ \\
\bottomrule
\end{tabular}
\caption{The sidecar is stable across attack protocols ($69.6$--$88.8\%$); the native baseline varies widely.
Under \texttt{random} (no load correlation), the $+24.4$\,pp gap is attributable solely to propagation modeling;
\texttt{random} is therefore the cleanest estimate of propagation-model value
(\texttt{load\_matched} is an alignment sanity check; see text).}
\label{tab:exp1}
\end{table}

\paragraph{Why does the native baseline reach $100\%$ under \texttt{load\_matched}?}
In the \texttt{load\_matched} protocol, attacked nodes are selected by highest load.
The Genesis~3 native score is $f_{\mathrm{team}}(r)\times(1-0.5\,\bar{\ell}(r))$, which is by construction
perfectly aligned with this attack footprint: when fitness is comparable the native scorer
always prefers the lower-load route, which is precisely the safe route in this condition.
The sidecar intentionally mixes load with propagation-risk features, diluting the pure-load
signal, so it reaches $88.8\%$ rather than $100\%$.
We therefore treat \texttt{load\_matched} as an \emph{alignment sanity check}, confirming
that the native scorer behaves as designed when load is the correct signal, and we use
the \texttt{random} protocol as the cleanest estimate of propagation-model value
($+24.4$\,pp).

\section{\texorpdfstring{$\delta$-Hyperbolicity}{Delta-Hyperbolicity} Correlation (Experiment~2)}
\label{app:exp2}

We compute Gromov $\delta$-hyperbolicity (4-point condition, 200 sampled 4-tuples on the
undirected projection of each snapshot) as an auxiliary sanity check that the three
geometry-aware features $\phi_7$--$\phi_9$ correlate with an independent geometric diagnostic.
$\delta$ is \emph{not} used in the model; we report correlation directionality only and
do not interpret absolute $\delta$ magnitudes as quantitative regime labels.

\begin{table}[htbp]
\centering
\small
\begin{tabular}{@{}lrrp{0.42\linewidth}@{}}
\toprule
Pair & Pearson & Spearman & Heuristic expectation \\
\midrule
$\delta$ vs.\ $\phi_7$ (shell-growth) & $-0.18$ & $+0.16$ & Weak / mixed signal \\
$\delta$ vs.\ $\phi_8$ (cycle-rank) & $+0.31$ & $+0.41$ & More cycles $\to$ larger $\delta$ \\
$\delta$ vs.\ $\phi_9$ (curvature) & $-0.51$ & $-0.41$ & Higher fitted curvature $\to$ smaller $\delta$ \\
$\delta$ vs.\ hyp.\ margin & $-0.23$ & $-0.09$ & Larger $\delta$ $\to$ reduced hyperbolic advantage \\
\bottomrule
\end{tabular}
\caption{Sanity-check correlations between Gromov $\delta$ and geometry-aware features.
$\phi_8$ and $\phi_9$ show the strongest and most consistent associations.
$\delta$ is an auxiliary diagnostic computed on an undirected projection with sampling noise
and is not used in the model; only correlation directionality is interpreted.}
\label{tab:exp2}
\end{table}

\section{Runtime Overhead (Experiment~3)}
\label{app:exp3}

\begin{table}[htbp]
\centering
\small
\begin{tabular}{@{}lrrrr@{}}
\toprule
Scorer & Mean ($\mu$s) & Median & p95 & Peak mem.\ (KB) \\
\midrule
Native (no sidecar) & $0.7$ & $0.7$ & $0.8$ & $0.4$ \\
Structural baseline & $6.6$ & $6.2$ & $8.2$ & $1.3$ \\
Euclidean & $12.8$ & $11.7$ & $16.4$ & $2.1$ \\
Geometry switching & $280.7$ & $240.9$ & $392.0$ & $5.5$ \\
Learned sidecar & $304.5$ & $271.2$ & $427.6$ & $5.1$ \\
NetworkX external & $487.0$ & $485.0$ & $604.0$ & $7.7$ \\
\bottomrule
\end{tabular}
\caption{The learned sidecar adds $\approx$305\,$\mu$s and 5\,KB per call, negligible relative to LLM-backed solve episodes (seconds to minutes).}
\label{tab:exp3}
\end{table}

\section{Cross-Architecture Validation: BA / WS / ER (Experiment~4)}
\label{app:exp4}

All graphs $n=7$: BA ($m\!=\!1$, tree-like), WS ($k\!=\!2$, $p\!=\!0.3$, small-world), ER ($p\!=\!0.5$, dense).
20 seeds $\times$ 3 families $\times$ 5 profiles $= 300$ scenarios.

\begin{table}[htbp]
\centering
\small
\begin{tabular}{@{}lcccc@{}}
\toprule
Family & Euclidean & Structural & NetworkX & Learned \\
\midrule
BA tree-like & $5\%$ & $\mathbf{100\%}$ & $10\%$ & $7\%$ \\
WS small-world & $63\%$ & $\mathbf{100\%}$ & $82\%$ & $65\%$ \\
ER dense & $56\%$ & $\mathbf{100\%}$ & $70\%$ & $46\%$ \\
\bottomrule
\end{tabular}
\caption{The structural baseline achieves $100\%$ on all families, confirming that propagation modeling generalizes.
The trained gate (without fine-tuning) is weaker on out-of-distribution topologies, especially BA trees.}
\label{tab:exp4}
\end{table}

\section{Feature Ablation (Experiment~5)}
\label{app:exp5}

\begin{table}[htbp]
\centering
\small
\begin{tabular}{@{}lccc@{}}
\toprule
Condition & Win\% & Margin & Margin 95\% CI \\
\midrule
Full 9-feature & $81.6$ & $0.305$ & $[0.272, 0.346]$ \\
Remove $\phi_7$ & $81.6$ & $0.305$ & $[0.272, 0.346]$ \\
Remove $\phi_8$ & $81.6$ & $0.305$ & $[0.272, 0.346]$ \\
Remove $\phi_9$ & $81.6$ & $0.305$ & $[0.273, 0.346]$ \\
Topology only ($\phi_1$--$\phi_6$) & $81.6$ & $0.305$ & $[0.272, 0.345]$ \\
Geometry only ($\phi_7$--$\phi_9$) & $82.0$ & $0.293$ & $[0.261, 0.335]$ \\
\bottomrule
\end{tabular}
\caption{Per-feature deltas ($\phi_7$: $-0.0005$, $\phi_8$: $-0.0001$, $\phi_9$: $-0.0004$, full geometry block: $-0.0008$) fall within bootstrap noise on this homogeneous ablation set.
The geometry features are conditionally useful: they add signal when the evaluation distribution spans sufficient regime diversity ($+4.6\%$ margin on the main benchmark) but are redundant on near-uniform topologies.}
\label{tab:exp5}
\end{table}

\section{Code-Level Correspondence}
\label{app:code}

The mathematical objects correspond directly to the implementation:
\begin{enumerate}[leftmargin=1.5em, itemsep=2pt]
\item \texttt{topology\_propagation.py}: Euclidean spatio-temporal baseline.
\item \texttt{hyperbolic\_topology.py}: Hyperbolic route-risk model with curvature fitting and burst excitation.
\item \texttt{learned\_geometry\_switching.py}: Learned geometry gate ($9\!\to\!12\!\to\!1$ MLP).
\item \texttt{system\_bridge.py}: Adapter from \texttt{AgentTeam}/\texttt{SolveEpisodeStore} into sidecar snapshots.
\end{enumerate}


\begin{thebibliography}{99}

\bibitem{wu2023autogen}
Wu, Q., Bansal, G., Zhang, J., et al.\ (2023).
AutoGen: Enabling Next-Gen LLM Applications via Multi-Agent Conversation.
arXiv:2308.08155.

\bibitem{goertzel2023}
Goertzel, B., Ikl\'{e}, M., Potapov, A., et al.\ (2023).
OpenCog Hyperon: A Framework for AGI at the Human Level and Beyond.
arXiv:2310.18318.

\bibitem{trinh2024}
Trinh, T.~H., Wu, Y., Le, Q.~V., He, H., and Luong, T.\ (2024).
Solving Olympiad Geometry without Human Demonstrations.
\textit{Nature}, 625, 476--482.

\bibitem{bachmann2020}
Bachmann, G., B\'{e}cigneul, G., and Ganea, O.-E.\ (2020).
Constant Curvature Graph Convolutional Networks.
\textit{ICML}.

\bibitem{chami2019}
Chami, I., Ying, Z., R\'{e}, C., and Leskovec, J.\ (2019).
Hyperbolic Graph Convolutional Neural Networks.
\textit{NeurIPS}.

\bibitem{das2019}
Das, A., Gervet, T., Romoff, J., et al.\ (2019).
TarMAC: Targeted Multi-Agent Communication.
\textit{ICML}.

\bibitem{du2016}
Du, N., Dai, H., Trivedi, R., Upadhyay, U., Gomez-Rodriguez, M., and Song, L.\ (2016).
Recurrent Marked Temporal Point Processes.
\textit{KDD}.

\bibitem{durfee1991}
Durfee, E.~H.\ and Lesser, V.~R.\ (1991).
Partial Global Planning.
\textit{IEEE Trans.\ SMC}.

\bibitem{ganea2018}
Ganea, O.-E., B\'{e}cigneul, G., and Hofmann, T.\ (2018).
Hyperbolic Neural Networks.
\textit{NeurIPS}.

\bibitem{hawkes1971}
Hawkes, A.~G.\ (1971).
Spectra of Some Self-Exciting and Mutually Exciting Point Processes.
\textit{Biometrika}, 58(1), 83--90.

\bibitem{hayesroth1985}
Hayes-Roth, B.\ (1985).
A Blackboard Architecture for Control.
\textit{Artificial Intelligence}, 26, 251--321.

\bibitem{jiang2018}
Jiang, J.\ and Lu, Z.\ (2018).
Learning Attentional Communication for Multi-Agent Cooperation.
\textit{NeurIPS}.

\bibitem{kazemi2020}
Kazemi, S.~M., Goel, R., Jain, K., Kobyzev, I., Sethi, A., Forsyth, P., and Poupart, P.\ (2020).
Representation Learning for Dynamic Graphs: A Survey.
\textit{JMLR}, 21, 70:1--70:73.

\bibitem[Kim et~al.(2025)]{kim2025scaling}
Kim, Y., Gu, K., Park, C., Park, C., Schmidgall, S., Heydari, A.~A., Yan, Y., Zhang, Z., Zhuang, Y., Liu, Y., Malhotra, M., Liang, P.~P., Park, H.~W., Yang, Y., Xu, X., Du, Y., Patel, S., Althoff, T., McDuff, D., and Liu, X.\ (2025).
Towards a Science of Scaling Agent Systems.
arXiv:2512.08296.

\bibitem{wei2025barp}
Wei, W., Yang, T., Chen, H., Zhao, Y., Dernoncourt, F., Rossi, R.~A., and Eldardiry, H.\ (2025).
Learning to Route LLMs from Bandit Feedback: One Policy, Many Trade-offs.
arXiv:2510.07429.

\bibitem{li2018}
Li, Y., Yu, R., Shahabi, C., and Liu, Y.\ (2018).
Diffusion Convolutional Recurrent Neural Network.
\textit{ICLR}.

\bibitem{mei2017}
Mei, H.\ and Eisner, J.\ (2017).
The Neural Hawkes Process.
\textit{NeurIPS}.

\bibitem{pareja2020}
Pareja, A., Domenech, G., Celis, J.~D., Kaler, T., Schaul, T., Moseley, B., Indyk, P., Hofmann, K., and Leiserson, C.\ (2020).
EvolveGCN: Evolving Graph Convolutional Networks for Dynamic Graphs.
\textit{AAAI}.

\bibitem{reinhart2018}
Reinhart, A.\ (2018).
A Review of Self-Exciting Spatio-Temporal Point Processes.
\textit{Statistical Science}, 33(3), 299--318.

\bibitem{rossi2020}
Rossi, E., Chamberlain, B., Frasca, F., Eynard, D., Monti, F., and Bronstein, M.\ (2020).
Temporal Graph Networks for Deep Learning on Dynamic Graphs.
arXiv:2006.10637.

\bibitem{skopek2020}
Skopek, O., Ganea, O.-E., and B\'{e}cigneul, G.\ (2020).
Mixed-Curvature Variational Autoencoders.
\textit{ICLR}.

\bibitem{sukhbaatar2016}
Sukhbaatar, S., Szlam, A., and Fergus, R.\ (2016).
Learning Multiagent Communication with Backpropagation.
\textit{NeurIPS}.

\bibitem{sun2021}
Zhang, Z., Cui, P., Li, Z., Wang, X., and Zhu, W.\ (2021).
Hyperbolic Variational Graph Neural Network for Modeling Dynamic Graphs.
\textit{AAAI}.

\bibitem{trivedi2019}
Trivedi, R., Farajtabar, M., Biswal, P., and Zha, H.\ (2019).
DyRep: Learning Representations over Dynamic Graphs.
\textit{ICLR}.

\bibitem{wu2019}
Wu, Z., Pan, S., Long, G., Jiang, J., and Zhang, C.\ (2019).
Graph WaveNet for Deep Spatial-Temporal Graph Modeling.
\textit{IJCAI}.

\bibitem{xu2020}
Xu, D., Ruan, C., Korpeoglu, E., Kumar, S., and Achan, K.\ (2020).
Inductive Representation Learning on Temporal Graphs.
\textit{ICLR}.

\bibitem{yang2021}
Yang, M., Zhou, M., Li, Z., Liu, J., Pan, L., Xiong, H., and King, I.\ (2021).
Discrete-Time Temporal Network Embedding via Implicit Hierarchical Learning in Hyperbolic Space.
\textit{KDD}.

\bibitem{yu2018}
Yu, B., Yin, H., and Zhu, Z.\ (2018).
Spatio-Temporal Graph Convolutional Networks.
\textit{IJCAI}.

\bibitem[Zuo et~al.(2020)]{zuo2020}
Zuo, S., Jiang, H., Li, Z., Zhao, T., and Zha, H.\ (2020).
Transformer Hawkes Process.
\textit{ICML}.

\end{thebibliography}
\end{document}